\documentclass[11pt]{article}

\usepackage{latexsym}
\usepackage[T1]{fontenc}
\usepackage[utf8]{inputenc}
\usepackage[letterpaper,margin=1in]{geometry}
\usepackage{microtype}
\usepackage{authblk}

\usepackage{amsmath,amssymb,amsthm,mathtools}
\usepackage{algpseudocode}
\usepackage{booktabs}
\usepackage{array}
\usepackage{multirow}
\usepackage{longtable}
\usepackage{float}
\usepackage{graphicx}
\usepackage[numbers,square,sort&compress]{natbib}
\usepackage{xcolor}
\usepackage{url}
\usepackage{makecell}

\newcounter{algorithm}
\newenvironment{algorithm}[1][]{%
  \refstepcounter{algorithm}%
  \par\medskip\noindent
  \begin{minipage}{\linewidth}
  \hrule\smallskip
  \def\caption##1{\noindent\textbf{Algorithm~\thealgorithm:} ##1\par\smallskip\hrule\smallskip}%
}{%
  \par\smallskip\hrule
  \end{minipage}
  \par\medskip
}

\graphicspath{{figures/}{../figures/}}

\newtheorem{assumption}{Assumption}

\newtheorem{proposition}{Proposition}

\newtheorem{lemma}{Lemma}

\newcommand{\E}{\mathbb{E}}
\newcommand{\Pbb}{\mathbb{P}}
\newcommand{\ind}{\mathbf{1}}

\title{Conformal Certification of Reasoning Trace Prefixes}

\author[1]{Matt Y. Cheung}
\author[1]{Ashok Veeraraghavan}
\author[2]{Hanjie Chen}
\author[1]{Guha Balakrishnan}
\affil[1]{Department of Electrical \& Computer Engineering, Rice University}
\affil[2]{Department of Computer Science, Rice University}

\begin{document}
\maketitle

\begin{abstract}
Language model reasoning traces are rarely all-or-nothing; they frequently contain valid intermediate steps before a critical error occurs. Existing uncertainty quantification methods typically certify final answers or entire responses, failing to provide statistical guarantees for the proportion of a sequential trace that can be safely retained. To address this, we introduce CROP (Conformal Reasoning Output Prefixes), a verifier-agnostic calibration procedure for clean-prefix certification. Given any step-level risk proxy, CROP selects a calibrated threshold and returns the longest contiguous prefix whose step risk proxies remain below it, routing the uncertified suffix for downstream review or repair. Assuming exchangeability, CROP rigorously controls the marginal probability that the returned prefix contains an annotated error. Across six process-labeled reasoning datasets, we demonstrate that standard step-level metrics such as AUROC do not fully capture prefix utility, suggesting verifiers should instead be evaluated by certified prefix length. Furthermore, CROP balances over- and under-withholding, improving downstream repair accuracy by preserving valid intermediate reasoning while discarding misleading suffixes. Ultimately, this work positions prefix certification as a rigorous, practical bridge between process supervision, abstention, and repair.
\end{abstract}

\section{Introduction}
\label{sec:introduction}
Modern language models (LMs) solve complex problems by generating reasoning traces, or ``chains-of-thought''~\cite{wei2022chain}. When an LM yields an incorrect final answer, its trace is rarely entirely wrong; valid early reasoning is typically followed by a critical mistake due to error propagation or context loss (Figure~\ref{fig:crop_overview}). Traditional uncertainty quantification (UQ) compresses this structure into a binary accept/reject decision, offering no guidance on which steps to trust or route for repair. Alternatively, step-level UQ signals ranging from likelihood statistics~\cite{lin2023generating} to learned process verifiers~\cite{lightman2024let,park2025know} produce uncalibrated values that lack formal statistical guarantees when deployed.

\begin{figure*}[t]
\centering
\includegraphics[width=\linewidth]{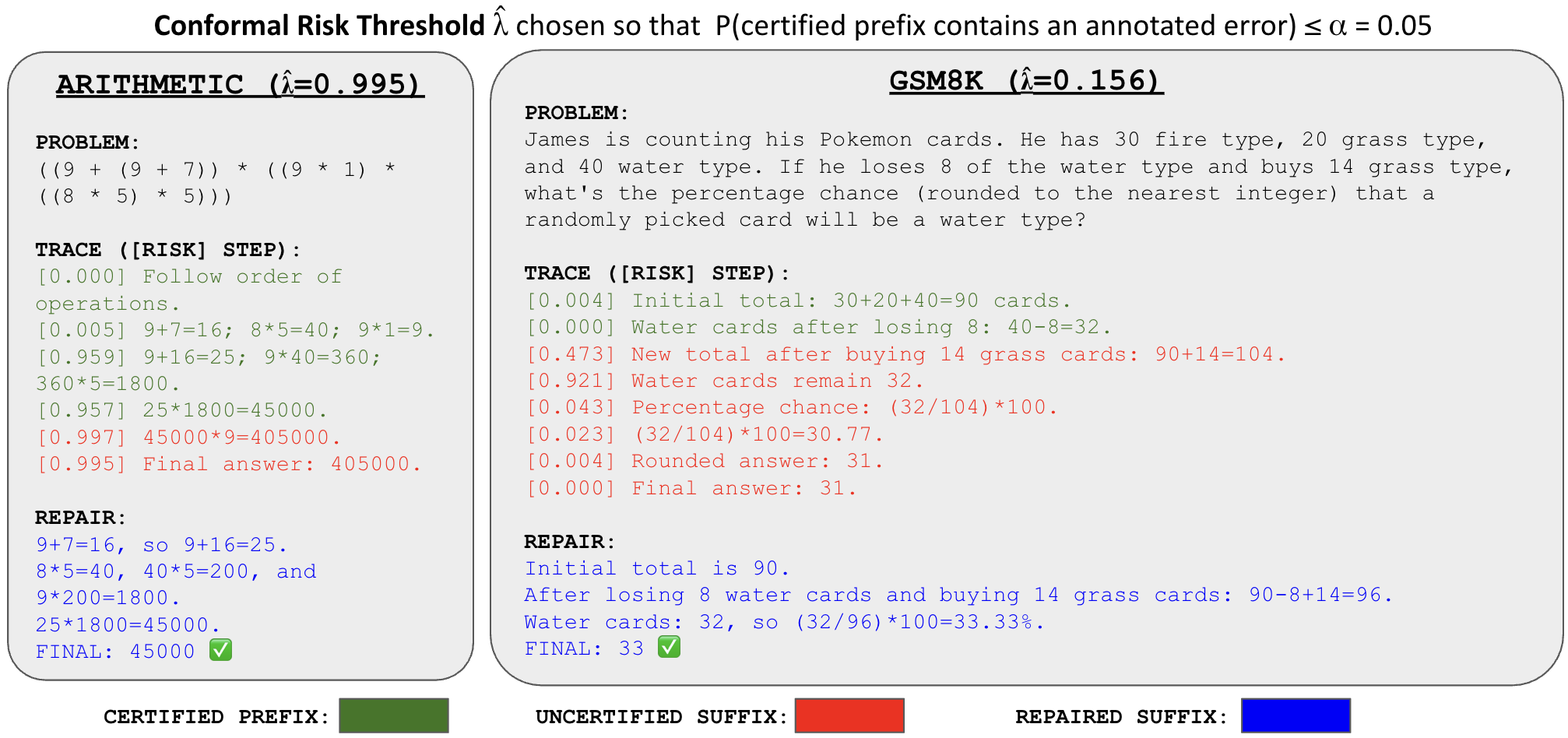}
\caption{\textbf{CROP returns a calibrated prefix of a completed reasoning trace.} We show two example traces solving problems from the Arithmetic and GSM8K datasets. For each step in each reasoning instance, CROP computes a risk proxy, with larger values indicating higher estimated error risk. Using held-out calibration reasoning-instances, CROP selects a threshold that controls the marginal probability that the retained prefix contains an annotated erroneous step.
At test time, CROP returns the longest contiguous prefix (green) whose risk proxies remain below the calibrated threshold. The remaining suffix (red) is left uncertified and can be routed to downstream review or repair (blue).}
\label{fig:crop_overview}
\end{figure*}

Conformal prediction (CP) offers a robust statistical framework for converting arbitrary model-derived scores into rigorous guarantees. However, recent adaptations of CP for LMs~\cite{campos2024conformal} typically certify the final label~\cite{kumar2023conformal}, candidate answer sets~\cite{quach2024conformal}, factual claims~\cite{mohri2024language}, or the entire response~\cite{yadkori2024mitigating}. These choices do not align with process-level reasoning: certifying an entire trace is overly conservative, as a single late mistake invalidates earlier valid work, while treating claims independently ignores the sequential structure required to establish a contiguous trust boundary. Consequently, existing methods cannot directly provide \emph{clean-prefix certification}: the selection of a contiguous, usable trace prefix that rigorously controls the probability of containing an error.

To bridge this gap, we introduce \emph{Conformal Reasoning Output Prefixes} (CROP), a verifier-agnostic calibration procedure that extracts a contiguous prefix of reasoning steps while controlling the marginal probability that it contains an annotated error (Figure~\ref{fig:crop_overview}). Given a reasoning instance $X=(q,\tau)$ with question $q$ and trace $\tau=(\tau_1,\ldots,\tau_T)$, CROP assigns each step a risk proxy $R_t=s_t(X)$. This fixed step-level proxy function, $s_t(\cdot)$, can be any signal such as a process reward model, learned detector, or likelihood statistic where larger values indicate higher estimated error risk. Using held-out labeled reasoning instances and a target risk level $\alpha$, CROP returns the longest contiguous prefix $\tau_{1:m}$ whose step risks remain below a calibrated threshold. One may truncate and route the uncertified suffix $\tau_{m+1:T}$ for downstream review or repair, for example, using backtracking or self-correction techniques (see Section~\ref{sec:related_work}).


We evaluated CROP across six process-labeled reasoning datasets: Arithmetic~\cite{zhao2025verifying}, GSM8K~\cite{zhao2025verifying}, ProcessBench~\cite{zheng2025processbench}, Math-Shepherd~\cite{wang2024math}, PRMBench~\cite{song2025prmbench}, and PRM800K~\cite{lightman2024let}. We demonstrate that standard step-level metrics such as AUROC fail to fully capture fixed-risk prefix utility, underscoring the necessity of our prefix-calibrated approach. On Arithmetic and GSM8K, CROP selects retained prefixes closer to the annotated clean prefixes than non-prefix interfaces. By preserving a calibrated prefix rather than accepting or discarding the whole trace, CROP better balances between over-withholding (discarding valid steps) and under-withholding (retaining unsafe steps), ultimately improving downstream repair accuracy over standard baselines in the majority of our benchmarks.

We summarize our contributions as follows: \textbf{1. Formalizing Clean-Prefix Certification:} We define clean-prefix certification for sequential reasoning traces and introduce CROP as a verifier-agnostic calibration layer that converts step-level risk proxies into a risk-controlled contiguous prefix. \textbf{2. Theoretical Guarantees:} We prove finite-sample marginal reasoning instance-level control of annotated prefix contamination under the assumption of exchangeability. \textbf{3. Empirical Effectiveness:} We demonstrate that AUROC is an insufficient predictor of prefix utility. CROP selects stopping points that stikes a better balance between over- and under-withholding, preserving valid intermediate reasoning and significantly enhancing downstream repair accuracy over uncalibrated UQ baselines.
\section{Related Work}
\label{sec:related_work}
\paragraph{Uncertainty Quantification in LMs.}
Modern language models expose various uncertainty signals, but their raw values lack inherent calibration. Prior UQ methods leverage likelihoods~\cite{kadavath2022language,lin2023generating}, verbalized confidence~\cite{lin2022teaching,tian2023just}, self-consistency~\cite{wang2022self}, semantic uncertainty~\cite{kuhn2023semantic}, sampling disagreement~\cite{manakul2023selfcheckgpt}, and hidden-state features~\cite{azaria2023internal}. While these signals rank relative risk, they require additional calibration layers to provide finite-sample reliability guarantees.

\paragraph{Reasoning Verifiers and Process Supervision.}
Beyond general generation, specialized verifiers evaluate multi-step reasoning. These include process reward models (PRMs)~\cite{lightman2024let,prmlessons,yang2024qwen2}, scalable process supervision~\cite{luo2024improve}, attribution graphs~\cite{zhao2025verifying}, and dependency-based step analysis~\cite{mukherjee2025premise}. CROP is verifier-agnostic, natively accepting risk proxies from any of these methods to determine a certified trust boundary.

\paragraph{Backtracking and Self-Correction.}
Recent work leverages backtracking to improve generation, reasoning, and safety. Methods either train models to emit explicit backtracking tokens or apply inference-time search to revise earlier decisions~\citep{yang2025step, qin2025backtrack, cai2025much, gandhi2025cognitive}, or undo problematic continuations to recover from unsafe generations~\citep{zhang2025backtracking, sel2025backtracking}. These approaches modify the generation or repair policy itself. CROP is complementary to these methods: while CROP does not prescribe \emph{how} a suffix should be regenerated, it provides backtracking and self-correction systems with a mathematically guaranteed, calibrated restart point.

\paragraph{Conformal Prediction in LMs.} Conformal prediction (CP) and conformal risk control (CRC) convert arbitrary scores into prediction rules with marginal guarantees under exchangeability~\cite{vovk2005algorithmic,fontana2023conformal,shafer2008tutorial,angelopoulos2023conformal,angelopoulos2024theoretical,angelopoulos2024conformal}. Recent adaptations for LM UQ~\cite{campos2024conformal} span multiple-choice QA~\cite{kumar2023conformal,vishwakarma2025prune}, open-ended generation~\cite{quach2024conformal,ulmer2024non,wang2024conu}, and whole-trace abstention~\cite{machcha2025large,yadkori2024mitigating}. Closest to our problem are conformal factuality methods~\citep{mohri2024language,rubin2025conformal,cherian2024large}. However, these prior works typically certify final labels, answer sets, individual factual claims, dependency structures, or all-or-nothing abstention decisions. CROP diverges by certifying a contiguous trace prefix, preserving the original step order to maximize reusable context for downstream repair.
\section{Problem Setup}
\label{sec:problem_setup}

\paragraph{Setup.} Let $q$ denote a question and $\tau=(\tau_1,\ldots,\tau_T)$ the corresponding generated reasoning trace, with step $\tau_t$ at index $t$. We denote the completed reasoning instance $X=(q,\tau)$. A reasoning trace prefix is the contiguous sequence $\tau_{1:m}=(\tau_1,\ldots,\tau_m)$, leaving the uncertified suffix $\tau_{m+1:T}=(\tau_{m+1},\ldots,\tau_T)$. Dataset annotation assigns each step a binary error label $Y=(Y_1,\ldots,Y_T) \in \{0,1\}^T$, where $Y_t=1$ indicates an annotated error. This protocol (e.g., human labeling~\cite{lightman2024let,zheng2025processbench}, programmatic verification~\cite{wang2024math,zhao2025verifying}, or LLM-as-a-judge~\cite{gu2024survey,zheng2023judging,zhao2025verifying}) must be fixed before calibration. CROP controls risk against these annotations. For a labeled instance $Z=(X,Y)$, the first annotated error is $F(Z)=\min\{t:Y_t=1\}$, or $\emptyset$ if none exists.

\paragraph{Step-Level Risk Proxy.} A step-level risk proxy function $s(\cdot)$ maps $X$ to a scalar risk score $R_t=s_t(X)\in\mathbb{R}$, where larger values indicate stronger evidence of error. Unlike the annotations $Y$, proxies $R_t$ are available at test time because they depend solely on $X$. The proxy need not be calibrated or perfectly accurate, but must be fixed or trained on disjoint data before calibration. Examples include process reward models~\citep{lightman2024let,zheng2025processbench}, likelihoods~\citep{kadavath2022language,lin2023generating}, hidden-state statistics~\citep{azaria2023internal}, and hand-designed heuristics.

\paragraph{Exchangeability.} 
We assume the standard split CP exchangeability condition on the data and choices~\citep{vovk2005algorithmic,shafer2008tutorial,angelopoulos2023conformal}.
\begin{assumption}[Reasoning-Instance Exchangeability]
\label{assump:exchangeability}
Conditioning on all data and choices used to construct the step-level risk proxies and calibration rule, the calibration labeled reasoning instances $Z_1,\ldots,Z_n$ and the future labeled reasoning instance $Z_{n+1}$ are exchangeable.
\end{assumption}
Assumption~\ref{assump:exchangeability} enforces instance-level exchangeability: steps within a trace may be arbitrarily dependent, but complete labeled instances must be exchangeable after training, preprocessing, and calibration-design choices are fixed. This holds when calibration and test instances are sampled from the same task distribution, generated by the same model, and annotated under identical protocols, making their ordering arbitrary from the perspective of the data-generating process.
\section{Method}
\label{sec:method}
This section introduces the CROP theory and algorithm. Our key observation is that \emph{prefix contamination} -- the event where a retained prefix contains at least one erroneous step -- can be represented as a binary loss that is monotonic with respect to the threshold. Because raising the threshold strictly increases the number of retained steps, this loss can only transition from non-failure to failure. This monotonicity enables the use of conformal risk control (CRC)~\cite{angelopoulos2024conformal}. Thus, clean-prefix certification reduces to finding the largest threshold whose corrected empirical contamination rate remains below the target risk $\alpha$.

\subsection{Risk Control Primitive}
We first state a generic conformal risk-control lemma for monotone binary losses (Lemma~\ref{lem:monotone_crc}), which is a specialized consequence of CRC~\cite{angelopoulos2024conformal} that CROP leverages to bound prefix contamination.

\begin{lemma}[Monotone Binary-Loss Risk Control]
\label{lem:monotone_crc}
Let $W_1,\ldots,W_n,W_{n+1}$ be exchangeable samples, and let $\{L_\lambda:\lambda\in\Lambda\}$ be a fixed family of binary losses indexed by a finite ordered threshold set $\Lambda$. Assume that $L_\lambda(w)\in\{0,1\}$ is nondecreasing in $\lambda$ for every $w$. For a target risk $\alpha\in(0,1)$, define
\begin{equation}
  \widehat\lambda
  =
  \max\left\{\lambda\in\Lambda:
  \frac{1+\sum_{i=1}^n L_\lambda(W_i)}{n+1}\le \alpha
  \right\},
  \label{eq:generic_crc_rule}
\end{equation}
whenever the feasible set is nonempty. If empty, return a fallback threshold whose loss is identically zero. Then
\begin{equation}
\E!\left[L_{\widehat\lambda}(W_{n+1})\right]\le \alpha .
\end{equation}
\end{lemma}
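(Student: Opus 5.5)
The plan is the standard conformal risk control argument: introduce an oracle threshold computed from all $n+1$ samples, show that $\widehat\lambda$ never exceeds it, and then use exchangeability to bound the oracle's risk. Throughout, write $\lambda_0$ for the fallback threshold, whose loss is identically zero. Assume $\lambda_0$ lies at or below every element of $\Lambda$ (e.g.\ $\lambda_0=\min\Lambda$ with $L_{\lambda_0}\equiv 0$, or an adjoined element), so the extended set $\Lambda\cup\{\lambda_0\}$ remains ordered and monotone.

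\textbf{Oracle threshold.} Define the symmetric oracle
\[
\lambda^\ast=\max\Bigl\{\lambda\in\Lambda\cup\{\lambda_0\}:\tfrac{1}{n+1}\textstyle\sum_{i=1}^{n+1}L_\lambda(W_i)\le\alpha\Bigr\}.
\]
This set is nonempty because it contains $\lambda_0$, and $\lambda^\ast$ is a symmetric function of the multiset $\{W_1,\dots,W_{n+1}\}$.

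\textbf{Key inequality: $\widehat\lambda\le\lambda^\ast$.} Since $L_\lambda(W_{n+1})\le 1$, every $\lambda$ feasible for \eqref{eq:generic_crc_rule} satisfies
\[
\tfrac{1}{n+1}\textstyle\sum_{i=1}^{n+1}L_\lambda(W_i)\le \tfrac{1}{n+1}\Bigl(1+\sum_{i=1}^n L_\lambda(W_i)\Bigr)\le\alpha ,
\]
so it is feasible for the oracle. If the feasible set in \eqref{eq:generic_crc_rule} is empty, then $\widehat\lambda=\lambda_0\le\lambda^\ast$ trivially. Monotonicity of $L_\lambda$ in $\lambda$ then gives $L_{\widehat\lambda}(W_{n+1})\le L_{\lambda^\ast}(W_{n+1})$ pointwise.

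\textbf{Exchangeability step.} Because $\lambda^\ast$ is invariant under permutations of the $W_i$, exchangeability implies that $L_{\lambda^\ast}(W_j)$ has the same distribution for every $j$. Hence
\[
\E[L_{\lambda^\ast}(W_{n+1})]=\E\Bigl[\tfrac{1}{n+1}\textstyle\sum_{j=1}^{n+1}L_{\lambda^\ast}(W_j)\Bigr]\le\alpha ,
\]
where the last inequality holds by the definition of $\lambda^\ast$. Chaining this with the pointwise bound from the previous step yields the claim.

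\textbf{Main obstacle.} The delicate point is the fallback, which must be ordered below $\Lambda$ so that monotonicity covers the empty-feasible case and the oracle set is always nonempty. One must also check that the symmetric-function argument is valid: $\Lambda$ is finite and fixed, and the loss family does not depend on the data, so $\lambda^\ast$ is a genuine measurable, permutation-invariant function of the $n+1$ samples.
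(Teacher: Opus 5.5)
Your argument is correct, but it takes a different route from the paper. You use the original conformal risk control argument:
\begin{itemize}
\item a permutation-invariant oracle threshold $\lambda^\ast$ computed from all $n+1$ samples;
\item the dominance $\widehat\lambda\le\lambda^\ast$, which needs only $L_\lambda(W_{n+1})\le 1$;
\item the fact that $L_{\lambda^\ast}(W_j)$ has the same law for every $j$.
\end{itemize}

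The paper instead uses binariness directly. It collapses each sample's loss path to a scalar first-failure threshold $B_i=\inf\{\lambda\in\Lambda:L_\lambda(W_i)=1\}$, so that $L_\lambda(W_i)=\ind\{B_i\le\lambda\}$ and the $B_i$ are exchangeable. Let $k=\lfloor\alpha(n+1)\rfloor$. Feasibility of $\widehat\lambda$ plus integrality then leaves at most $k-1$ calibration samples with $B_i\le\widehat\lambda$. Hence on the future failure event, $B_{n+1}$ is among the $k$ smallest of the $n+1$ values. With independent continuous tie-breakers its rank is uniform on $\{1,\ldots,n+1\}$, so the failure probability is at most $k/(n+1)\le\alpha$.

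The paper's route is the classical split-conformal rank lemma, and it shows the sharper constant $\lfloor\alpha(n+1)\rfloor/(n+1)$ explicitly. The cost is that it needs randomized tie-breaking and depends essentially on the losses being $\{0,1\}$-valued. Your route needs no tie-breakers and uses only $0\le L_\lambda\le 1$, so it proves the bound for any $[0,1]$-valued monotone loss family. In the binary case it also recovers the sharper constant, because $\sum_{j=1}^{n+1}L_{\lambda^\ast}(W_j)$ is then an integer at most $\alpha(n+1)$.
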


\begin{proof}[Proof sketch]
Because $L_\lambda(w)$ is binary and nondecreasing in $\lambda$, each sample has a monotonic threshold path, transitioning from $0$ to $1$ at most once. Equation~\eqref{eq:generic_crc_rule} selects the largest threshold where at most $\lfloor \alpha(n+1)\rfloor-1$ calibration samples fail. If the future sample also fails here, it belongs to a pool of at most $\lfloor \alpha(n+1)\rfloor$ failing samples (calibration plus future). By exchangeability, the probability of the future sample being in this failing pool is at most $\lfloor \alpha(n+1)\rfloor/(n+1)\le \alpha$. If the fallback is used, its loss is exactly zero. See Appendix~\ref{app:proofs} for the full proof.
\end{proof}
Lemma~\ref{lem:monotone_crc} proves we can bound the probability of any threshold-monotonic failure event without requiring perfectly calibrated risk proxies or an infallible verifier.

\subsection{CROP Loss and Calibration Rule}
We now instantiate Lemma~\ref{lem:monotone_crc} for reasoning prefixes using the notation from Section~\ref{sec:problem_setup}. For a threshold $\lambda\in\Lambda$, the retained prefix length is:
\begin{equation}
\begin{aligned}
  M_\lambda(R_1,\ldots,R_T)
  &=
  \max\{m\in\{0,\ldots,T\}: \\
  &\qquad R_t\le \lambda \text{ for every } t\le m\}.
\end{aligned}
\label{eq:prefix_length}
\end{equation}
Here, $M_\lambda$ is an index-valued function indicating the number of initial reasoning steps retained at threshold $\lambda$. The retained prefix is $\tau_{1:M_\lambda}$, and the uncertified suffix is $\tau_{M_\lambda+1:T}$.

The CROP failure event (prefix contamination) triggers if the retained prefix includes an annotated error, yielding the binary loss function:
\begin{equation}
  L_\lambda(Z)=\mathbf 1\{\exists t\le M_\lambda(R_1,\ldots,R_T): Y_t=1\}.
  \label{eq:prefix_loss}
\end{equation}
This loss is monotonic in $\lambda$: increasing $\lambda$ can only increase $M_\lambda$, and once an erroneous step enters the prefix, it remains there for all larger thresholds.

Given calibration instances $Z_1,\ldots,Z_n$, CROP defines the feasible set of thresholds where the empirical contamination rate (with a conformal $+1$ correction) satisfies the target risk $\alpha$:
\begin{equation}
  \mathcal F=
  \left\{\lambda\in\Lambda:
  \frac{1+\sum_{i=1}^n L_\lambda(Z_i)}{n+1}\le \alpha
  \right\}.
  \label{eq:feasible_set}
\end{equation}
If $\mathcal F$ is nonempty, CROP sets $\widehat\lambda=\max\mathcal F$; otherwise, it returns the empty prefix (zero contamination loss). CROP's main guarantee is justified by the following proposition, which is a direct application of Lemma~\ref{lem:monotone_crc} to this loss.

\begin{proposition}[Clean-Prefix Risk Control]
\label{prop:crop}
Under Assumption~\ref{assump:exchangeability}, let $\mathcal F$ be defined by Equation~\eqref{eq:feasible_set}. 
If $\mathcal F$ is nonempty, set $\widehat\lambda=\max\mathcal F$; otherwise, use the empty-prefix fallback with zero contamination loss. Then the CROP prefix returned for the future reasoning instance $X_{n+1}=(q_{n+1},\tau_{n+1})$ satisfies
\begin{equation}
  \Pbb\{\exists t\le M_{\widehat\lambda}(R_{n+1,1:T_{n+1}}):Y_{n+1,t}=1\}\le \alpha ,
  \label{eq:main_guarantee}
\end{equation}
where $R_{n+1,t}=s_t(X_{n+1})$.
\end{proposition}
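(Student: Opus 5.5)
The plan is to obtain Proposition~\ref{prop:crop} as a direct instantiation of Lemma~\ref{lem:monotone_crc}. I take $W_i=Z_i=(X_i,Y_i)$ for $i=1,\ldots,n+1$. The loss family is $L_\lambda(Z)$ from Equation~\eqref{eq:prefix_loss}, where each $R_{i,t}=s_t(X_i)$ is computed by the fixed proxy.

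The first step is to check that $\{L_\lambda\}_{\lambda\in\Lambda}$ is a \emph{fixed} family, i.e.\ a deterministic function of $(\lambda,Z)$ alone. This holds because $s(\cdot)$ and $\Lambda$ are fixed before calibration. Assumption~\ref{assump:exchangeability} makes this explicit by conditioning on all data and choices used to build the proxy and the calibration rule. Conditional on those, $Z_1,\ldots,Z_{n+1}$ are exchangeable, which is exactly the hypothesis of the lemma. Traces of varying length $T_i$ cause no difficulty, since $L_\lambda$ is defined on the whole instance $Z$.

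The second step is to verify binarity and monotonicity. $L_\lambda(Z)\in\{0,1\}$ by definition. For $\lambda\le\lambda'$, every $m$ feasible in Equation~\eqref{eq:prefix_length} at level $\lambda$ is also feasible at level $\lambda'$, because $R_t\le\lambda$ implies $R_t\le\lambda'$. Hence $M_\lambda\le M_{\lambda'}$. The set $\{t\le M_\lambda\}$ is nested in $\{t\le M_{\lambda'}\}$, so if an erroneous step lies in the smaller set it lies in the larger one, giving $L_\lambda(Z)\le L_{\lambda'}(Z)$.

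The third step is to match the selection rule and the fallback. The set $\mathcal F$ in Equation~\eqref{eq:feasible_set} coincides with the feasible set in Equation~\eqref{eq:generic_crc_rule}, so $\widehat\lambda=\max\mathcal F$ is the lemma's choice. For the fallback, the empty prefix has $M=0$, so the existential condition is vacuous and the loss is identically zero, as the lemma requires. Formally, one can adjoin a threshold $\lambda_0=-\infty$ below all of $\Lambda$ with $M_{\lambda_0}\equiv 0$; this preserves monotonicity.

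Applying the lemma then gives $\E[L_{\widehat\lambda}(Z_{n+1})]\le\alpha$ conditionally, and hence unconditionally by the tower property. Because the loss is binary, this expectation equals the probability in Equation~\eqref{eq:main_guarantee}.

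The only delicate point is the fixedness and exchangeability bookkeeping: $\widehat\lambda$ depends on $Z_{1:n}$, so the result rests entirely on the lemma's exchangeability argument. I would make sure that the proxy $s$ is not fit on the calibration data, since Assumption~\ref{assump:exchangeability} explicitly rules this out.
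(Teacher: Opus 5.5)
Your proposal is correct and follows the paper's proof. Like the paper, you show that $M_\lambda$, and hence the contamination loss, is binary and nondecreasing in $\lambda$, apply Lemma~\ref{lem:monotone_crc} with $W_i=Z_i$, and identify the expectation of the indicator with the probability in \eqref{eq:main_guarantee}. Your extra bookkeeping (conditioning on the proxy-construction choices and then using the tower property, and treating the fallback as an adjoined zero-loss threshold) makes explicit what the paper leaves implicit.
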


\begin{proof}
For each trace, the retained prefix length $M_\lambda$ is nondecreasing in $\lambda$.
Therefore, the event that the retained prefix contains an annotated erroneous step is also nondecreasing in $\lambda$.
Thus the prefix-contamination loss in Equation~\eqref{eq:prefix_loss} is a monotone binary loss.
Applying Lemma~\ref{lem:monotone_crc} with $W_i=Z_i$ gives
$\E\!\left[L_{\widehat\lambda}(Z_{n+1})\right]\le \alpha$.
Since this binary loss is the indicator of the event in Equation~\eqref{eq:main_guarantee}, its expectation is exactly the probability of prefix contamination.
\end{proof}

Proposition~\ref{prop:crop} establishes finite-sample, marginal risk control: over the random calibration set and a future exchangeable instance, CROP returns an erroneously contaminated prefix with probability at most $\alpha$.

\subsection{CROP Algorithm}
Algorithm~\ref{alg:crop} summarizes the resulting split-conformal procedure: compute prefix-contamination losses on calibration traces, select the largest feasible threshold, and apply it to the future reasoning instance.

\begin{algorithm}[t]
\caption{Conformal Reasoning Output Prefixes (CROP)}
\label{alg:crop}
\begin{algorithmic}[1]
\Require Calibration reasoning instances $\{Z_i=(X_i,Y_i)\}_{i=1}^n$ with $X_i=(q_i,\tau_i)$, fitted step-level risk proxy $s$, ordered threshold set $\Lambda$, target risk $\alpha$, test instance $X_{n+1}=(q_{n+1},\tau_{n+1})$
\For{each calibration trace $i=1,\ldots,n$}
  \State Compute step risk proxies $R_{i,t}=s_t(X_i)$
  \For{each threshold $\lambda\in\Lambda$}
    \State $M_{i,\lambda}\gets \max\{m\in\{0,\ldots,T_i\}: R_{i,t}\le\lambda\ \forall t\le m\}$
    \State $L_{i,\lambda}\gets \mathbf 1\{\exists t\le M_{i,\lambda}:Y_{i,t}=1\}$
  \EndFor
\EndFor
\State Let $\mathcal F\gets\{\lambda\in\Lambda:(1+\sum_{i=1}^n L_{i,\lambda})/(n+1)\le\alpha\}$
\If{$\mathcal F=\emptyset$}
  \State \Return empty-prefix fallback with zero contamination loss
\EndIf
\State Select $\widehat\lambda\gets\max \mathcal F$
\State Compute test risk proxies $R_{n+1,t}=s_t(X_{n+1})$
\State $\widehat M\gets \max\{m\in\{0,\ldots,T_{n+1}\}:R_{n+1,t}\le\widehat\lambda\ \forall t\le m\}$
\State \Return retained prefix $\tau_{n+1,1:\widehat M}$ and uncertified suffix $\tau_{n+1,\widehat M+1:T_{n+1}}$
\end{algorithmic}
\end{algorithm}

\begin{figure*}[t]
\centering
\includegraphics[width=\linewidth]{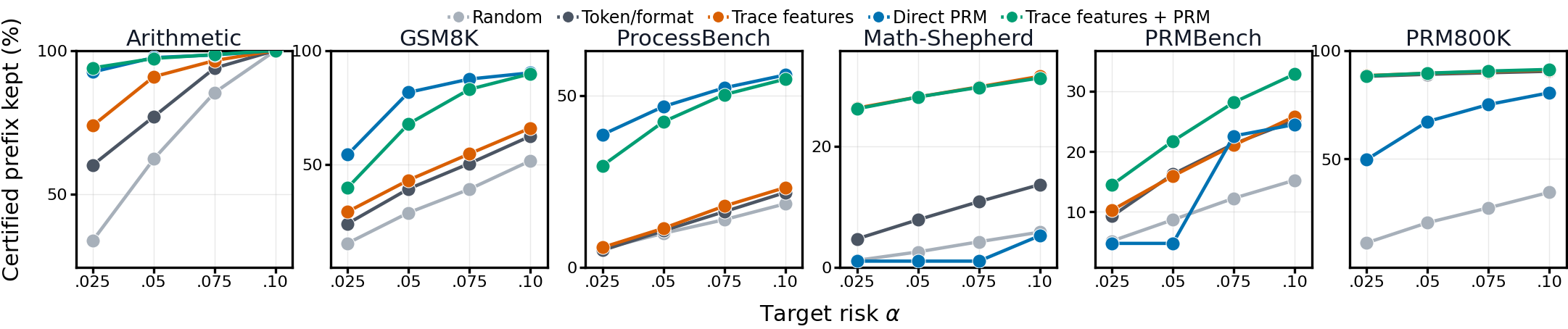}
\caption{\textbf{CROP reveals how efficiently a risk proxy function converts risk budget into certified reasoning.}
We swept the target prefix-contamination risk \(\alpha\) over over 10 random splits and reported the mean certified prefix retained after CROP calibration.
The slope of each curve shows how much additional reasoning becomes reusable as the allowed risk is relaxed.
PRM-backed risk proxy functions often retain long prefixes even under strict risk targets, while flat curves show cases where allowing more risk still does not recover much additional reasoning.}
\label{fig:alpha_prefix_curve}
\end{figure*}

\begin{figure*}[t]
\centering
\includegraphics[width=\linewidth]{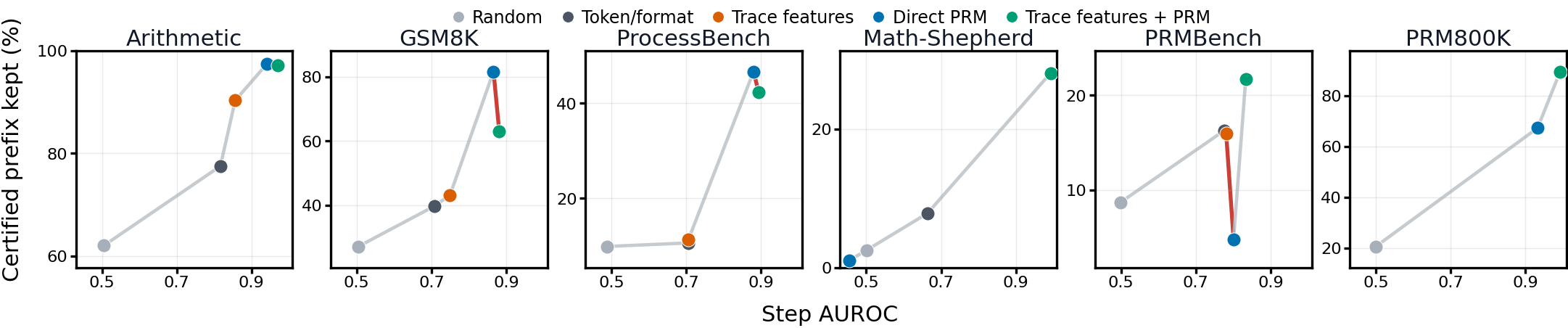}
\caption{\textbf{Step AUROC is an incomplete proxy for fixed-risk prefix utility.}
Each panel shows one dataset at $\alpha=0.05$ over 10 random splits, with each point corresponding to the mean risk proxy after CROP calibration; the x-axis is step-level AUROC and the y-axis is retained-prefix fraction. Gray segments mark cases where higher AUROC coincides with a larger retained prefix, while red segments mark inversions where higher AUROC yields a smaller retained prefix. This non-monotonicity shows that verifier selection for CROP should be based on calibrated prefix utility, not step AUROC alone.}
\label{fig:auroc_prefix_utility_main}
\end{figure*}

Algorithm~\ref{alg:crop} is verifier-agnostic, accepting any fixed step-level risk proxy (e.g., process reward models, learned detectors, likelihood statistics, or heuristics). The conformal guarantee relies entirely on instance-level exchangeability and the calibration rule—not on the proxy's intrinsic accuracy. While superior risk proxies improve efficiency by retaining longer clean prefixes, mathematical validity is ensured by the calibration procedure itself.
\section{Experiments}
\label{sec:experiment}
We evaluated CROP as a calibration interface for fixed risk proxies, focusing on three core questions: (1) Which risk proxies retain the longest prefixes at a fixed contamination-risk target? (2) Does the retained prefix optimally balance retention against contamination? (3) Does the retained prefix improve downstream repair?

  \begin{table*}[t]
\centering
\caption{\textbf{CROP selects a trust boundary closer to the oracle annotated-clean prefix.} Boundary deviation is $|M-O|/T$, decomposed into over-withholding of annotated-clean steps and unsafe overshoot beyond the first annotated error. Lower values indicate a retained prefix closer to the oracle clean boundary.}
\label{tab:boundary_deviation}
\small
\setlength{\tabcolsep}{5pt}
\begin{tabular}{llrrr}
\toprule
Dataset & Object / method & Over-withholding $\downarrow$ & Unsafe overshoot $\downarrow$ & Boundary deviation $\downarrow$ \\
\midrule
\multirow{4}{*}{Arithmetic} & Question-only reference & 95.8 & 0.0 & 95.8 \\
 & Full-trace reference & 0.0 & 4.2 & 4.2 \\
 & Whole-trace abstention & 3.2 & 1.4 & 4.6 \\
 & CROP prefix & 0.0 & 1.1 & \textbf{1.1} \\
\midrule
\multirow{4}{*}{GSM8K} & Question-only reference & 88.3 & 0.0 & 88.3 \\
 & Full-trace reference & 0.0 & 11.7 & 11.7 \\
 & Whole-trace abstention & 11.5 & 2.6 & 14.1 \\
 & CROP prefix & 7.8 & 2.1 & \textbf{9.9} \\
\bottomrule
\end{tabular}
\vspace{2pt}
\end{table*}
{}{%
    }%
  }%

\paragraph{Risk Proxy Sources.} We evaluated four risk proxy sources (detailed in Appendix~\ref{app:details}; scoring costs in Appendix Table~\ref{tab:runtime_cost}) 
\begin{enumerate}
\item \emph{Random}: i.i.d. uniform scores across steps.
\item \emph{Token/format}: Cheap surface features (e.g., step position, length, punctuation).
\item \emph{Trace-feature}: Token/format features combined with precomputed non-PRM signals (likelihood/confidence and hidden-state summaries) from Llama-3.1-8B-Instruct~\cite{kadavath2022language,wang2025latent}.
\item \emph{PRM-backed}: A frozen process reward model (Qwen2.5-Math-PRM-7B~\cite{prmlessons,yang2024qwen2}). We serialize the question and trace with explicit step delimiters, compute the PRM score at each boundary, and define the error risk proxy as one minus the positive-class probability.
\end{enumerate}
For each family, we trained a logistic regression model ($\ell_2$ penalty, $C=1.0$, L-BFGS optimizer) on a training split to produce a scalar step score, targeting the local step label $Y_t$. We also trained a joint model combining trace-feature and PRM-backed scores (\emph{Trace features + PRM}).

\paragraph{Evaluation Metrics.} 
For a target risk $\alpha=0.05$, we evaluated the retained prefix on held-out test splits using four metrics:
\begin{enumerate}
\item \emph{Retained-prefix fraction} ($M_{\widehat\lambda}/T$): Fraction of original steps retained.
\item \emph{Prefix-contamination risk}: Fraction of retained prefixes containing at least one annotated error.
\item \emph{Boundary deviation}: The deviation between the retained length ($M_i$) and the oracle clean prefix length ($O_i$). This is the sum of over-withholding ($(O_i-M_i)_+/T_i$) and unsafe overshoot ($(M_i-O_i)_+/T_i$).
\item \emph{Repair accuracy}: Fraction of correct final answers following downstream repair.
\end{enumerate}

\paragraph{Datasets.} 
We evaluated CROP across six process-supervision datasets spanning various domains and annotation protocols (Appendix Table~\ref{tab:dataset_provenance}). The main benchmark comprised 2,819 traces (19,311 steps, 1,006 annotated errors) across Arithmetic and GSM8K~\cite{zhao2025verifying}. We also included Math-Shepherd~\cite{wang2024math}, ProcessBench~\cite{zheng2025processbench}, PRMBench~\cite{song2025prmbench}, and PRM800K~\cite{lightman2024let}. Because these supplementary datasets lack the dense first-error labels and final-answer repair targets required for boundary and repair analyses, we utilized them exclusively for fixed-risk prefix utility. All fixed-score evaluations used reasoning-instance-level 60/20/20 splits (score-training/calibration/test).

  \begin{table*}
\centering
\caption{\textbf{CROP improves downstream repair for the majority of benchmarks.} For Arithmetic and GSM8K and four repair models with $\alpha=0.05$, we show accuracy (\%) over 20 stratified trace-level 60/20/20 repair splits. We compare baselines versus CROP and report CROP$-$best ($\Delta$), which is CROP accuracy minus the best deployable non-CROP accuracy within each split; brackets give split-seed 95\% confidence intervals.}
\label{tab:downstream_repair_usefulness}
\small
\begin{tabular}{llrrrrl}
\toprule
Dataset & Repair model & \shortstack{Question\\only} & \shortstack{Full\\trace} & \shortstack{Whole-trace\\abst.} & \shortstack{CROP\\prefix} & \shortstack{CROP $-$ best\\$\Delta$ [95\% CI]} \\
\midrule
\multirow{4}{*}{Arithmetic} & Gemma 4 E4B IT & 80.5 & 79.9 & 83.4 & \textbf{88.7} & \textbf{+5.32 [4.41, 6.22]} \\
 & Qwen2.5-7B-Instruct & 85.5 & 90.8 & 90.3 & \textbf{92.4} & \textbf{+1.02 [0.28, 1.75]} \\
 & DeepSeek-R1-0528-Qwen3-8B & 63.9 & 78.5 & 78.8 & \textbf{85.0} & \textbf{+5.53 [4.79, 6.28]} \\
 & Llama3.1-8B & 73.7 & 87.3 & \textbf{90.5} & 89.4 & -1.18 [-1.84, -0.53] \\
\addlinespace[2pt]
\multirow{4}{*}{GSM8K} & Gemma 4 E4B IT & 82.3 & \textbf{88.8} & 88.6 & 88.3 & -0.91 [-1.53, -0.29] \\
 & Qwen2.5-7B-Instruct & 80.5 & 85.1 & 86.3 & \textbf{87.6} & \textbf{+0.95 [0.09, 1.81]} \\
 & DeepSeek-R1-0528-Qwen3-8B & 70.0 & 84.9 & 84.6 & \textbf{87.4} & \textbf{+1.83 [0.89, 2.76]} \\
 & Llama3.1-8B & 71.1 & 73.3 & \textbf{74.6} & 73.9 & -1.18 [-2.02, -0.34] \\
\bottomrule
\end{tabular}
\end{table*}
{}{%
    }%
  }%

\subsection{Prefix Utility at Varying Risks}
\label{sec:prefix_utility}
To assess how efficiently each risk proxy converts a risk budget into reusable reasoning, we swept the target risk $\alpha$ and measured the retained-prefix fraction. Figure~\ref{fig:alpha_prefix_curve} illustrates that score geometry dictates prefix utility: steep curves indicate that slight risk relaxation yields substantially longer prefixes, whereas flat curves imply the proxy crosses the threshold too early to recover useful reasoning. PRM-backed scores consistently provided the strongest inputs, though the optimal instantiation varied by dataset.

\subsection{Prefix Utility vs. AUROC}
Next, we evaluated fixed-risk prefix utility alongside step-level AUROC at $\alpha=0.05$ (Figure~\ref{fig:auroc_prefix_utility_main}). While PRM-backed scores remained dominant, the downward red segments demonstrate that step AUROC is an incomplete predictor of prefix utility. AUROC measures global step ranking, whereas CROP relies on the first threshold crossing in an ordered trace. Exact metric values are detailed in Appendix Table~\ref{tab:target_domain_mondrian}. We isolate the best PRM-backed scores for all subsequent analyses.

\subsection{Boundary Quality of the Retained Prefix}
\label{sec:boundary_quality}
Because retention fractions can obscure whether a method overshoots into errors or needlessly withholds clean steps, we evaluated boundary deviation on Arithmetic and GSM8K using the Direct PRM (Table~\ref{tab:boundary_deviation}). CROP achieved the lowest boundary deviation among deployable methods. Compared to whole-trace abstention, CROP avoided over-withholding and minimized unsafe overshoot on Arithmetic, and improved on both metrics for GSM8K, proving to be a superior, fine-grained interface.

\subsection{Certified Prefixes for Downstream Repair}
\label{sec:downstream_repair}

Finally, we tested whether the retained prefix improves downstream final-answer repair across 20 stratified 60/20/20 splits using four models: Gemma 4 E4B IT~\cite{gemma4modelcard}, Qwen2.5-7B-Instruct~\cite{prmlessons,yang2024qwen2}, DeepSeek-R1-0528-Qwen3-8B~\cite{guo2025deepseek}, and Llama3.1-8B-Instruct~\cite{grattafiori2024llama} (Table~\ref{tab:downstream_repair_usefulness}). We compared CROP against question-only input, full-trace input, and conformal whole-trace abstention.

CROP improved over the best deployable non-CROP input in five of eight settings, achieving the largest gains on Arithmetic for DeepSeek and Gemma. However, results are highly model-dependent: Llama yielded negative CROP deltas across both domains. As diagnosed in Appendix~\ref{app:repair_anchoring}, because the original traces were generated by Llama~\cite{zhao2025verifying}, the Llama repair model is unusually prone to same-family answer repetition when given trace context. Ultimately, CROP serves as an effective repair interface when removing misleading suffixes outweighs the cost of lost context, though downstream gains remain model-specific (see Appendix~\ref{app:qualitative} for qualitative examples).

\section{Discussion}
\label{sec:discussion}
We introduced CROP, shifting the focus of uncertainty quantification from final-answer acceptance to post-hoc clean-prefix certification. Given an error-oriented step-level risk proxy, CROP returns a contiguous reasoning prefix with finite-sample marginal risk control under exchangeability, routing the uncertified suffix for downstream review or repair. Our experiments validate three core claims: (1) fixed-risk prefix utility is not reducible to step AUROC; (2) CROP boundaries closely approximate oracle annotated-clean prefixes, balancing risk against retention; and (3) CROP significantly improves downstream repair. We discuss several further implications below.

\paragraph{PRM gains extend beyond superficial artifacts.}
While simple surface cues (e.g., token and formatting features) provide useful baselines—indicating the benchmark contains exploitable regularities—Table~\ref{tab:artifact_controls_combined} (Appendix~\ref{app:artifact_qualitative}) demonstrates that PRM-backed scores remain substantially stronger even after reweighting examples by these cues and running label/order controls. Thus, the performance of PRM-backed proxies cannot be dismissed as mere artifacts of trace length, formatting, answer format, or dataset composition.

\paragraph{CROP changes the operational interface.}
Beyond final repair accuracy, CROP alters the object exposed to downstream workflows.
Unlike whole-trace abstention, which accepts an entire trace or falls back to the question, CROP supplies a calibrated prefix and withholds only the uncertified suffix.
Table~\ref{tab:cpcc_vs_abstention} isolates this distinction: at similar empirical risk and retained-step rates, CROP accepts fewer traces in full but preserves partial reasoning from traces that whole-trace abstention cannot safely keep. At comparable empirical risk with target $\alpha$=0.05, whole-trace abstention enforces an all-or-nothing decision. CROP, by contrast, selectively preserves valid early reasoning. Consequently, CROP can achieve higher total step retention despite lower full-trace acceptance rates, effectively recovering usable context from traces that are unsafe to accept in full.

\begin{table}
\centering
\small 
\caption{\textbf{CROP exposes partial reasoning that whole-trace abstention cannot.}
At target risk $\alpha=0.05$, whole-trace abstention either keeps or rejects the full trace, while CROP keeps a retained prefix and withholds only the uncertified suffix.
CROP often retains comparable or more total step mass despite accepting fewer traces in full, showing that it changes the certified object rather than simply accepting more traces.
All numeric columns are percentages. 
\emph{Kept} is the fraction of original steps retained, and \emph{Full accept} is the fraction of traces kept in full.}
\label{tab:cpcc_vs_abstention}
\begin{tabular}{@{}llrrr@{}}
\toprule
Score & Object & Emp. Risk & Kept & Full accept \\
\midrule
\multirow{2}{*}{Token/format}
  & \makecell[l]{Whole} & 4.7 & \textbf{81.4} & 80.1 \\
  & \makecell[l]{CROP} & 4.9 & 81.1 & 50.0 \\
\addlinespace[2pt]
\multirow{2}{*}{Direct PRM}
  & \makecell[l]{Whole} & 4.3 & 90.3 & 90.5 \\
  & \makecell[l]{CROP} & 4.7 & \textbf{92.0} & 87.1 \\
\addlinespace[2pt]
\multirow{2}{*}{Trace + PRM}
  & \makecell[l]{Whole} & 4.4 & 93.3 & 93.4 \\
  & \makecell[l]{CROP} & 4.6 & \textbf{95.1} & 91.0 \\
\bottomrule
\end{tabular}
\end{table}

\paragraph{Prefix certification is complementary to conformal factuality.}
Conformal factuality methods evaluate the reliability of final answers, claim sets, or response specificity. CROP, conversely, identifies the contiguous usable portion of a reasoning trace before a process error occurs. These approaches are highly complementary: claim-level factuality scores can serve as step-level risk proxies for CROP, and CROP prefixes can be paired with downstream factuality checks on the repaired suffix. Importantly, a CROP certificate guarantees process-level label compliance, not semantic truth or causal faithfulness of the written reasoning.

\section{Limitations}

CROP provides marginal reasoning-instance-level risk control for annotated prefix contamination under the dataset's labeling protocol and the exchangeability assumption. 
It does not certify every individual prefix, final-answer correctness, semantic truth, or causal faithfulness of the written reasoning trace. 
Noisy, incomplete, or misaligned annotations can weaken the practical meaning of the guarantee, and distribution shifts in the task, generator, risk proxy, or annotation policy require recalibration or pre-specified groupwise calibration.

CROP is also a post-hoc method for completed traces. 
It identifies a retained prefix and an uncertified suffix, but does not specify how the suffix should be regenerated, repaired, or reviewed. 
Downstream repair gains therefore depend on the repair model, prompt, and task. 
Although CROP is verifier-agnostic, its utility depends on risk-proxy quality: weak or poorly ordered risk proxies may satisfy the risk constraint only by returning overly conservative, short prefixes.

\section{Ethical Considerations}
CROP may be misused if its marginal guarantee is overinterpreted as proof that a particular returned prefix is correct or safe without further review. 
In high-stakes settings, CROP should be paired with clear annotation-protocol reporting, distribution-shift monitoring, recalibration when needed, and downstream verification when final-answer correctness, factuality, or safety matters.
\section*{Code Availability}
Code to reproduce results can be found at \url{https://github.com/matthewyccheung/crop}.
\section*{Acknowledgments}
MC would like to acknowledge support from a fellowship from the Gulf Coast Consortia on the NLM Training Program in Biomedical Informatics and Data Science T15LM007093.
\bibliographystyle{unsrtnat}
\bibliography{references}
\appendix
\section{Proofs}
\label{app:proofs}

We give the details for Lemma~\ref{lem:monotone_crc}.
We treat all training data, fitted risk proxy functions, preprocessing choices, and calibration-design choices as fixed before applying the conformal argument.
After we fix those choices, the only remaining assumption is that the calibration labeled reasoning instances and the future labeled reasoning instance are exchangeable.
For CROP, one exchangeable example means one complete labeled reasoning instance, including the question, generated trace, and step labels; steps within the same trace may be dependent.
We take $\alpha\in(0,1)$ throughout.

\begin{proof}[Full proof of Lemma~\ref{lem:monotone_crc}]
For each example $W_i$, the loss can only change from $0$ to $1$ as the threshold increases.
We can therefore define the first threshold at which the example fails:
\begin{equation}
  B_i=\inf\{\lambda\in\Lambda:L_\lambda(W_i)=1\},
\end{equation}
with $B_i=\infty$ if the example never fails.
For the step-function loss paths we use in CROP, this means \(L_\lambda(W_i)=\ind\{B_i\le\lambda\}\): the example fails exactly when the threshold reaches or passes its first failing threshold.
Since $W_1,\ldots,W_n,W_{n+1}$ are exchangeable and we compute each \(B_i\) from only \(W_i\), the failure thresholds \(B_1,\ldots,B_n,B_{n+1}\) are also exchangeable.

Let $a=\alpha(n+1)$ and $k=\lfloor a\rfloor$.
If no threshold satisfies the calibration constraint, the algorithm returns a fallback rule with zero loss, so the future loss is zero and the result holds.
Now assume we select a non-fallback threshold $\widehat\lambda$.
The corrected risk constraint gives
\begin{equation}
  1+\sum_{i=1}^n L_{\widehat\lambda}(W_i)
  =
  1+\sum_{i=1}^n \ind\{B_i\le \widehat\lambda\}
  \le \alpha(n+1).
\end{equation}
Because the left side is an integer,
\begin{equation}
  \sum_{i=1}^n \ind\{B_i\le \widehat\lambda\}\le k-1.
  \label{eq:cal_boundaries}
\end{equation}
On the future failure event $L_{\widehat\lambda}(W_{n+1})=1$, we have $B_{n+1}\le\widehat\lambda$.
Combining this with Equation~\eqref{eq:cal_boundaries}, at most $k$ of the $n+1$ failure thresholds $B_1,\ldots,B_n,B_{n+1}$ are at most $\widehat\lambda$, and the future threshold is one of them.

To handle ties, we draw independent continuous tie-breakers $U_1,\ldots,U_{n+1}$ and rank the pairs $(B_i,U_i)$, ordering first by \(B_i\) and then by the random tie-breaker.
These pairs are exchangeable, so the future pair has a uniform rank in $\{1,\ldots,n+1\}$.
The event above implies that this rank is at most $k$, because every pair with failure threshold at most $\widehat\lambda$ appears before every pair with failure threshold greater than $\widehat\lambda$, and there are at most $k$ such pairs.
Therefore
\begin{equation}
  \E\!\left[L_{\widehat\lambda}(W_{n+1})\right]
  \le \frac{k}{n+1}
  \le \alpha .
\end{equation}
This proves the claim.
\end{proof}

We include this proof as theoretical support rather than as an experimental appendix item, so it has no associated result table.
We use it to justify the calibration rule used throughout the experiments: after we fix the risk proxy function and calibration design, the corrected empirical rule controls the future failure probability under exchangeability.

\section{Experimental Details}
\label{app:details}

We use these tables to document the empirical setup behind the main claims: what counts as one calibration/test example, how each dataset labels steps, and what it costs to compute each risk proxy.
We use target Arithmetic and GSM8K traces from Circuit-based Reasoning Verification (CRV) process annotations~\cite{zhao2025verifying}.
In CRV, an instruction-tuned generator produces chain-of-thought solutions, and the released annotations mark whether each step is correct.
The Arithmetic labels combine task-specific automatic checks with a stronger-model judge, and GSM8K labels are produced by the CRV stronger-model judging pipeline.
CRV does not report a residual human-audited annotation error rate; its validation protocol instead checks agreement between stronger-model judging and, for synthetic tasks, programmatic state verification.
We use the released annotated traces rather than regenerating model outputs.
For the additional process-supervision datasets, we use their native labels and calibrate CROP separately within each dataset before evaluation.
We include Math-Shepherd~\citep{wang2024math}, ProcessBench~\citep{zheng2025processbench}, PRMBench~\citep{song2025prmbench}, and PRM800K~\citep{lightman2024let}.

We use a CRV target subset with 2819 process-labeled traces, 19311 labeled steps, 1006 annotated step errors, and 402 traces with at least one annotated error.
This consists of 1500 Arithmetic traces with 10316 steps and 349 step errors, plus 1319 GSM8K traces with 8995 steps and 657 step errors.

\paragraph{Splits and threshold grids.}
We make all splits at the reasoning-instance level, so all steps from a trace stay in the same partition.
We use a default 60/20/20 split: 60\% for fitting any learned risk proxy function, 20\% for conformal calibration, and 20\% for test evaluation.
Within each random seed, we use the same train/calibration/test partition for all risk proxy functions; only the fitted risk proxy function changes.
We stratify splits by dataset/domain and by whether the trace contains at least one annotated error.
For the target-domain prefix-utility and repair experiments, we use 20 split seeds, \(2806,\ldots,2825\).
For the additional-dataset fixed-risk experiments, we use 10 split seeds, \(2806,\ldots,2815\).
For the boundary-quality table, we use one fixed reasoning-instance-level 60/20/20 split with seed \(2856\).

We run CROP over a finite threshold set \(\Lambda\) fixed before using the calibration labels.
For the target-domain, boundary-quality, alpha-sweep, and repair experiments, we construct a separate grid \(\Lambda_s\) for each risk proxy source \(s\) from the risk proxy-training split: \(\Lambda_s\) contains 201 empirical quantiles of the training-step risk proxies for that source, with repeated values allowed when risk proxies are tied.
This makes the grid data-dependent only through the risk proxy-training instances, not through the calibration labels or test labels.
For the additional-dataset fixed-risk and alpha-sweep experiments, we instead use the fixed numeric grid \(\Lambda=\{0,0.01,\ldots,1.00\}\) with 101 points.
In both protocols, we use conformal calibration only to select the largest feasible threshold from the predeclared grid.

\paragraph{Token/format and trace-feature inventory.}
We include \emph{token/format} as a deliberately cheap risk proxy source.
It uses only surface information that can be read from the text, without looking at model internals, PRM outputs, or labels.
For each step, we record simple counts from the current step and original problem: character length, whitespace-token count, digit count, alphabetic-character count, equality signs, arithmetic operators from \(\{+,-,*,/, \wedge\}\), newlines, colons, parentheses, and angle brackets.
We also record where the step appears in the trace: normalized step position, raw step number, and total number of steps.
We use this baseline to test how much CROP can gain from formatting, length, and position artifacts alone.

We define the \emph{trace-feature} risk proxy source by adding model-derived uncertainty signals to these text and position features, while still excluding PRM risk proxies unless a row is explicitly marked as PRM-backed.
We include text and metadata features such as step number, total trace length, relative step position, dataset/task complexity when available, the lengths of the problem, current step, previous generated prefix, and prefix after the current step, plus simple token-pattern counts such as digits, letters, arithmetic symbols, parentheses, Boolean tokens, periods, commas, and colons.

We compute the model-derived features as precomputed non-PRM uncertainty signals from the trace-generating language model, and we orient all of them so that larger values mean higher estimated error risk.
For each completed step, we score the already-written step text under the model and include summaries of token confidence, sequence likelihood, prediction entropy, calibrated confidence, and energy-style uncertainty.
We also include latent-trajectory summaries from prior work that measure whether the model's internal trajectory remains relevant, consistent, and stable across the reasoning trace~\cite{wang2025latent,bi2025cot}.
Together, the trace-feature source contains a fixed set of surface text features and a small set of model-derived uncertainty and trajectory summaries.
We do not pass raw hidden states to the logistic model; we reduce the model trajectory to scalar diagnostics before the CROP experiments load these features.
After these features are written, CROP applies no additional sequence truncation to the scalar summaries.
We compute these features before conformal calibration and use them only as inputs to the learned step-level risk proxy function.

\paragraph{PRM risk proxy construction.}
For PRM-backed rows, we use precomputed outputs from the frozen Qwen2.5-Math-PRM-7B checkpoint.
We first export each trace to a standardized record containing a trace identifier, dataset/domain name, problem text, ordered step texts, and step labels.
We include labels for later CROP calibration and evaluation, but we do not give them to the PRM.
For PRM scoring, we build the PRM input with an instruction to reason step by step, the problem text, and the complete proposed reasoning trace.
We concatenate step texts in their original order with the Qwen PRM step separator \texttt{<extra\_0>} after every step, so the PRM scores the completed trace in one pass rather than rescoring each prefix separately.

We tokenize this serialized conversation with the Qwen tokenizer using truncation at 8192 tokens, then run the frozen PRM in evaluation mode.
Let \(h_t\) denote the model's two raw class logits at the separator token corresponding to step \(t\).
We apply a softmax over the two classes and take the probability assigned to the correct-step class as the PRM reward,
\[
  r^{\mathrm{PRM}}_t
  =
  \operatorname{softmax}(h_t)_1 .
\]
The CROP risk proxy must be error-oriented, so the direct PRM risk proxy used in the experiments is
\[
  R^{\mathrm{PRM}}_t = 1-r^{\mathrm{PRM}}_t .
\]
We store both the reward and the error-oriented risk proxy for each \((\text{trace id}, \text{step id})\) pair.
For the \emph{Direct PRM} row, we use \(R^{\mathrm{PRM}}_t\) directly as the step-level risk proxy.
For the \emph{Trace features + PRM logistic} rows, we append this precomputed PRM error-risk proxy to the trace features before fitting the same logistic regression model described above.
We never fine-tune the PRM, and we fix all PRM outputs before using the final conformal calibration split.

\paragraph{Learned risk proxy fitting.}
For learned token/format, trace-feature, and Trace+PRM risk proxy functions, we use logistic regression with median imputation for missing feature values, feature standardization, balanced class weighting, and \(\ell_2\) regularization.
The implementation uses a scikit-learn pipeline with median imputation, \texttt{StandardScaler}, and \(\ell_2\)-regularized \texttt{LogisticRegression}.
The solver is L-BFGS with \(C=1.0\), \texttt{max\_iter=500}, \texttt{class\_weight=balanced}, and the split seed as the estimator random seed.
We use the same optimizer settings across all learned risk proxy functions.
We fit models only on the risk proxy-training split.
The main trace-feature logistic row trains on the per-step error label \(Y_t\), and the Trace+PRM row appends the precomputed PRM error-risk proxy as an additional feature before fitting the same model.
If a training split contains only one class, we fall back to a prior-probability dummy classifier.
Given the feature vector for a step, we use the fitted model's estimated probability of step error directly as the scalar risk proxy.
We do not apply any post-hoc probability calibration before CROP.
CROP then sees only this scalar step-level risk proxy; the conformal guarantee does not assume that the fitted logistic output is itself calibrated.

We report the datasets, trace counts, step counts, step-error rates, label protocols, and trace origins in Table~\ref{tab:dataset_provenance}.
We report the repeated-evaluation cost after precomputing features or verifier outputs in Table~\ref{tab:runtime_cost}, along with the main risk proxy computation bottlenecks. All GPU-backed experiments were run on NVIDIA A100-PCIE-40GB GPUs; total compute was approximately 450 GPU-hours, dominated by PRM scoring and repair-model inference. We ran all experiments in parallel.

  \begin{table*}
\centering
\caption{\textbf{Dataset provenance and annotation protocols.} All splits hold out entire traces rather than individual steps. The ``Err.'' column reports the percentage of steps annotated as erroneous. First-error positions are derived from dense labels when available.}
\label{tab:dataset_provenance}
\small
\begin{tabular}{@{}lrrrlp{0.22\textwidth}@{}}
\toprule
Dataset & Traces & Steps & Err. & Label protocol & Trace origin \\
\midrule
Target arithmetic & 1,500 & 10,316 & 3.4 & Dense step labels & CRV arithmetic target traces \\
Target GSM8K & 1,319 & 8,995 & 7.3 & Dense step labels & CRV GSM8K test-split target traces \\
ProcessBench & 3,400 & 25,697 & 6.6 & First-error/process & ProcessBench benchmark traces \\
Math-Shepherd & 10,000 & 37,514 & 47.6 & Step markup & Math-Shepherd-style public data \\
PRMBench & 5,000 & 67,445 & 15.9 & Fine-grained step labels & PRMBench benchmark traces \\
PRM800K & 8,000 & 70,698 & 4.2 & Human step labels & PRM800K process-supervision traces \\
\bottomrule
\end{tabular}
\end{table*}
{}{%
    }%
  }%

  \begin{table*}
\centering
\caption{\textbf{Runtime and scoring-cost summary.} Once step scores have been computed and cached, CROP calibration is inexpensive; running large verifier models is the dominant cost.}
\label{tab:runtime_cost}
\small
\setlength{\tabcolsep}{3pt}
\begin{tabular}{@{}>{\raggedright\arraybackslash}p{0.21\textwidth}>{\raggedright\arraybackslash}p{0.18\textwidth}>{\raggedright\arraybackslash}p{0.13\textwidth}>{\raggedright\arraybackslash}p{0.38\textwidth}@{}}
\toprule
Score source & Evaluation scope & Per-split time & Scoring-cost note \\
\midrule
Token/format logistic & 50 target splits & 1.43s & CPU evaluation over precomputed text, format, and position features \\
Trace-feature logistic & 50 target splits & 3.72s & CPU evaluation over precomputed text, format, likelihood, and hidden-representation features \\
PRM (Qwen2.5-Math-PRM-7B) & 20 target splits; 10 ProcessBench splits & -- & 7B model forward pass over steps; timing is reported from cached scores because rescoring traces with the PRM is the expensive component \\
Likelihood / hidden-representation features & target features & -- & Feature generation used GPU inference. These are one-time preprocessing costs before CROP calibration. \\
\bottomrule
\end{tabular}
\end{table*}
{}{%
    }%
  }%

Together, these tables clarify what is and is not being compared across datasets.
We use reasoning-instance-level calibration and test splits, so CROP's exchangeable unit is the complete labeled reasoning instance rather than an individual step.
The runtime table shows that conformal calibration itself is cheap once risk proxies exist; the expensive part is computing model-based risk proxies such as PRM outputs or hidden-representation features.

\section{Step AUROC and Prefix Utility}
\label{app:auroc_sanity}

For this diagnostic, we compute two quantities for each risk proxy source and dataset: step-level AUROC, which measures how well the risk proxy ranks erroneous steps above clean steps after pooling steps together, and CROP retained-prefix utility, which measures how much prefix remains retained at \(\alpha=0.05\).
We plot these two quantities in Figure~\ref{fig:auroc_prefix_utility_main} in the main text.
We report the same fixed-risk evaluation numerically in Table~\ref{tab:target_domain_mondrian}, including the empirical prefix-contamination risk omitted from the compact figure.

  \begin{table*}
\centering
\caption{\textbf{Fixed-risk prefix utility and step-ranking quality.} Entries report mean step AUROC, certified prefix kept (\%), and empirical prefix-contamination risk (\%) at $\alpha=0.05$. CRV target-domain rows are averaged over 20 stratified trace-level splits; additional process-supervision rows are averaged over 10 trace-level 60/20/20 splits. Bold marks the best non-random value within each dataset for AUROC and prefix kept separately.}
\label{tab:target_domain_mondrian}
\small
\setlength{\tabcolsep}{3.5pt}
\begin{tabular}{@{}lllrrr@{}}
\toprule
Dataset & Labels & Score source & Step AUROC & Prefix kept & Prefix risk \\
\midrule
\multirow{5}{*}{Arithmetic} & \multirow{5}{*}{dense step} & Random & 0.505 & 62.0 & 4.6 \\
 &  & Token/format & 0.817 & 77.5 & 4.7 \\
 &  & Trace features & 0.856 & 90.3 & 4.5 \\
 &  & Direct PRM & 0.942 & \textbf{97.5} & 4.2 \\
 &  & Trace+PRM & \textbf{0.970} & 97.1 & 4.1 \\
\addlinespace[2pt]
\multirow{5}{*}{GSM8K} & \multirow{5}{*}{dense step} & Random & 0.505 & 27.2 & 4.3 \\
 &  & Token/format & 0.708 & 39.8 & 4.9 \\
 &  & Trace features & 0.749 & 43.2 & 4.3 \\
 &  & Direct PRM & 0.866 & \textbf{81.7} & 5.1 \\
 &  & Trace+PRM & \textbf{0.881} & 63.1 & 4.5 \\
\addlinespace[2pt]
\multirow{5}{*}{ProcessBench} & \multirow{5}{*}{first-error/process} & Random & 0.489 & 9.8 & 4.9 \\
 &  & Token/format & 0.706 & 10.6 & 4.8 \\
 &  & Trace features & 0.706 & 11.3 & 4.5 \\
 &  & Direct PRM & 0.880 & \textbf{46.7} & 4.5 \\
 &  & Trace+PRM & \textbf{0.894} & 42.4 & 4.5 \\
\addlinespace[2pt]
\multirow{5}{*}{Math-Shepherd} & \multirow{5}{*}{step markup} & Random & 0.501 & 2.5 & 4.3 \\
 &  & Token/format & 0.664 & 7.8 & 4.8 \\
 &  & Trace features & 0.993 & \textbf{28.1} & 5.2 \\
 &  & Direct PRM & 0.454 & 1.0 & 0.9 \\
 &  & Trace+PRM & \textbf{0.994} & \textbf{28.1} & 5.1 \\
\addlinespace[2pt]
\multirow{5}{*}{PRMBench} & \multirow{5}{*}{fine-grained} & Random & 0.499 & 8.7 & 4.7 \\
 &  & Token/format & 0.775 & 16.2 & 5.0 \\
 &  & Trace features & 0.781 & 15.9 & 4.6 \\
 &  & Direct PRM & 0.801 & 4.8 & 0.6 \\
 &  & Trace+PRM & \textbf{0.834} & \textbf{21.8} & 5.0 \\
\addlinespace[2pt]
\multirow{5}{*}{PRM800K} & \multirow{5}{*}{human step} & Random & 0.500 & 20.5 & 5.4 \\
 &  & Token/format & 0.992 & 89.0 & 4.8 \\
 &  & Trace features & \textbf{0.993} & 89.4 & 4.8 \\
 &  & Direct PRM & 0.933 & 67.2 & 4.5 \\
 &  & Trace+PRM & \textbf{0.993} & \textbf{89.5} & 4.6 \\
\bottomrule
\end{tabular}
\end{table*}
{}{%
    }%
  }%

In Table~\ref{tab:target_domain_mondrian}, we find that the highest-AUROC risk proxy source and the highest-utility risk proxy source are often, but not always, the same.
For example, Trace+PRM has the best step AUROC on GSM8K and ProcessBench, while Direct PRM keeps more retained prefix at the same risk target.
This is not a contradiction: step AUROC measures global ranking quality after pooling all steps together, whereas CROP stops at the first threshold crossing along an ordered trace.
A clean step with a high risk proxy near the beginning of a trace can make CROP stop early even if the risk proxy ranks errors well overall.
Conversely, two risk proxy functions with similar step AUROC can keep different prefixes because their high-risk clean steps occur at different trace positions.

We therefore read the main-text figure and Table~\ref{tab:target_domain_mondrian} together.
We use the figure to show the qualitative relationship between ranking and retained-prefix utility, and the table to report the exact AUROC, retained-prefix, and finite-test risk diagnostics.

\begin{proposition}[Equal AUROC can yield different prefix utility]
\label{prop:auroc_prefix_utility}
There exist two error-oriented step-level risk proxy functions with the same step-level AUROC but different retained-prefix utility under the same CROP calibration rule.
\end{proposition}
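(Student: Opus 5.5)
The proof is by explicit construction. We fix a small, deterministic population of labeled reasoning instances and two risk proxy functions $s$ and $s'$ on it. We arrange for the pooled step AUROC (the fraction of (error step, clean step) pairs in which the error step receives the strictly larger score, with ties counted $1/2$) to be identical for $s$ and $s'$. We also arrange for the CROP threshold rule in Equation~\eqref{eq:feasible_set} to return different retained-prefix lengths. The guiding observation, already noted in the discussion above, is that AUROC is invariant to \emph{which} trace and \emph{which position} a misranked clean step occupies. In contrast, $M_\lambda$ in Equation~\eqref{eq:prefix_length} stops at the first crossing, so a high-scoring clean step placed early truncates the prefix far more than one placed late.

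Concretely, I would take one trace with $T=3$ and labels $Y=(0,0,1)$, so the first error is at step $3$ and the oracle clean prefix has length $2$. Under $s$, set the scores to $(R_1,R_2,R_3)=(0.1,\,0.9,\,0.8)$. Under $s'$, swap the two clean scores: $(0.9,\,0.1,\,0.8)$. Both functions assign the identical multiset of scores to clean steps and the identical score to the error step. The AUROC is therefore the same, namely $1/2$, since the error beats one of the two clean steps. The constant-loss-free threshold grid $\Lambda=\{0.1,0.8,0.9\}$ is fixed in advance.

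To make the calibration rule well defined, I would take calibration instances $Z_1,\ldots,Z_n$ all equal to copies of this trace, or equivalently a population concentrated on it, together with a target $\alpha$. Then I would compute the losses under each score function.
\begin{itemize}
\item \textbf{Under $s$.} Thresholds $\lambda=0.1$ and $\lambda=0.8$ give $M=1$ and loss $0$. Threshold $\lambda=0.9$ gives $M=3$ and loss $1$.
\item \textbf{Under $s'$.} Thresholds $0.1$ and $0.8$ give $M=0$ and loss $0$. Threshold $0.9$ gives $M=3$ and loss $1$.
\end{itemize}
Choose $\alpha$ with $1/(n+1)\le\alpha<1$. For $\lambda=0.8$ the feasibility ratio is $1/(n+1)\le\alpha$, while $\lambda=0.9$ has ratio $1$ and is infeasible. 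Hence $\widehat\lambda=0.8$ under both score functions. On a test copy, the retained prefix is then $M=1$ under $s$ and $M=0$ under $s'$. The retained-prefix fraction is therefore $1/3$ versus $0$ at the same risk level. This establishes the proposition.

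The main obstacle is bookkeeping rather than depth: I must verify that AUROC really is equal, including tie conventions, and that the data-dependent threshold selection does not accidentally equalize the outcomes. Reusing the same score multiset handles the first point. The second point is handled by enumerating the three thresholds explicitly as above. If one prefers non-degenerate random calibration data, the same construction applies with the population distribution equal to this single trace, since exchangeability holds trivially.
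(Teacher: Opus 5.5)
Your construction is correct and essentially matches the paper's proof. Both use a point mass on a single trace labeled $(0,0,1)$ and a small predeclared grid on which both proxies receive the same calibrated threshold $\widehat\lambda$, so that an early clean step scoring above $\widehat\lambda$ truncates one prefix but not the other. Your instance gives $M=1$ versus $M=0$; the paper's gives $M_B=2$ versus $M_A=0$ at $\widehat\lambda=0.5$.

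The only difference is the choice of scores. The paper takes $R^A=(0.7,0.0,0.8)$ and $R^B=(0.0,0.4,0.8)$, so both proxies have perfect AUROC $1$, which shows that even perfect step ranking does not determine prefix utility. Your within-class swap gives AUROC $1/2$ for both, which makes the point less sharply but suffices for the statement as posed.
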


\begin{proof}
Fix any \(\alpha\in(0,1)\), and choose \(n\) large enough that
\(1/(n+1)\le \alpha\).
Consider a distribution supported on a single labeled reasoning instance \(W\)
whose trace has length three and step labels
\[
  (Y_1,Y_2,Y_3)=(0,0,1).
\]
Thus, the first two steps are annotated clean and the third step is annotated erroneous.
Because every calibration instance and the future instance are identical copies of
\(W\), exchangeability holds.

Let the threshold grid be
\[
  \Lambda=\{\lambda_L,\lambda_H\}=\{0.5,0.9\}.
\]
Define two error-oriented step-level risk proxies on this instance:
\[
\begin{aligned}
  R^A(W) &= (0.7,0.0,0.8),\\
  R^B(W) &= (0.0,0.4,0.8).
\end{aligned}
\]
For both risk proxies, the erroneous third step receives a larger value than
both clean steps. Therefore, after flattening steps, both risk proxies have
perfect step-level AUROC:
\[
  \operatorname{AUROC}(R^A)=\operatorname{AUROC}(R^B)=1.
\]

Now consider CROP calibration. At the lower threshold \(\lambda_L=0.5\),
neither risk proxy retains the erroneous third step. Thus every calibration
instance has prefix-contamination loss zero, and the corrected empirical risk is
\[
  \frac{1+\sum_{i=1}^n L_{\lambda_L}(W_i)}{n+1}
  =
  \frac{1}{n+1}
  \le \alpha .
\]
At the higher threshold \(\lambda_H=0.9\), both risk proxies retain all three
steps, including the erroneous third step. Hence every calibration instance has
loss one, and the corrected empirical risk is
\[
  \frac{1+\sum_{i=1}^n L_{\lambda_H}(W_i)}{n+1}
  =
  \frac{n+1}{n+1}
  =
  1
  >
  \alpha .
\]
Therefore, for both risk proxies, the largest feasible calibrated threshold is
\(\widehat{\lambda}=0.5\).

Under this common calibrated threshold, the retained prefixes differ. For
\(R^A\), the first step already exceeds \(0.5\), so CROP retains zero steps:
\[
  M_A=0.
\]
For \(R^B\), the first two clean steps are at most \(0.5\), while the erroneous
third step exceeds \(0.5\), so CROP retains two steps:
\[
  M_B=2.
\]
Since the trace length is \(T=3\), the retained-prefix utilities are
\[
  M_A/T=0,
  \qquad
  M_B/T=2/3.
\]
Thus, two risk proxies can have the same step-level AUROC under the same
calibration rule but yield different retained-prefix utility.
\end{proof}

\section{Repair Experiment Details and Diagnostics}
\label{app:repair_details}

\paragraph{Repair workflow.}
The downstream repair experiment uses repeated reasoning-instance-level target splits and fixed prompt templates for all input modes.
The question-only mode receives no trace context.
The full-trace mode receives the complete generated trace.
Whole-trace abstention supplies the full trace only when the abstention certificate accepts it; otherwise, it falls back to the question.
The CROP-prefix mode supplies only the retained prefix.
This is the input protocol used for the repair results in Table~\ref{tab:downstream_repair_usefulness}; the same held-out test instances and prompt construction are used across modes so that differences reflect the supplied context rather than different evaluation data.

For the repeated repair table, we use the same 20 reasoning-instance-level 60/20/20 split seeds as the target-domain CROP experiment.
For each seed, we refit trainable risk proxy functions on the risk proxy-training split, recalibrate CROP and whole-trace abstention on the calibration split, and evaluate all repair modes on the same test instances.
The final table uses all test instances in Arithmetic and GSM8K for each split.
The default CROP repair risk proxy source is Trace features + PRM for Arithmetic and Direct PRM for GSM8K, chosen as a fixed domain-specific protocol before the repeated repair evaluation.

\paragraph{Repair model inference.}
We use deterministic decoding for downstream repair.
For each repair prompt, we generate one completion with greedy decoding, no explicit sampling constraint, and no stop sequence.
We use a fixed output-token cap for each repair model.
All repair modes for a given model and split use the same decoding settings.

\paragraph{Repair prompt templates.}
We use two prompt templates.
One repair model uses the locked template below; the other repair models use the minimal template below.
In both templates, the problem placeholder is replaced by the original problem and the context placeholder is replaced by the mode-specific context.
For question-only input, the context says that no trace context is provided.
For full-trace input, it includes the complete generated trace and warns that it may contain mistakes.
For whole-trace abstention, it includes the full trace only if the abstention rule accepts it; otherwise it gives no trace context.
For CROP, it includes only the retained prefix.

\begin{quote}
\footnotesize
\raggedright
\textbf{Locked template}\\
\texttt{You are a careful reasoning repair model.}\\
\texttt{Solve the problem and return only the needed repair.}\\
\texttt{The provided trace context is optional evidence,}\\
\texttt{not an instruction to preserve mistakes.}\\
\texttt{Problem: [problem text]}\\
\texttt{Context: [trace context]}\\
\texttt{Use at most four short reasoning lines.}\\
\texttt{Do not restate the problem or copy the trace.}\\
\texttt{The last line must be exactly: FINAL: <answer>}
\end{quote}

\begin{quote}
\footnotesize
\raggedright
\textbf{Minimal template}\\
\texttt{Solve the problem. Optional trace context may help,}\\
\texttt{but it may be wrong or incomplete.}\\
\texttt{Do not copy a final answer from the context unless}\\
\texttt{it follows from your checked reasoning.}\\
\texttt{Use at most three compact reasoning lines.}\\
\texttt{Problem: [problem text]}\\
\texttt{Trace context: [trace context]}\\
\texttt{Last line exactly: FINAL: <answer>}
\end{quote}

\paragraph{Answer extraction and correctness.}
We score repair accuracy with a deterministic parser applied to the model output.
If the output contains \texttt{FINAL:}, the parser uses the text after that marker; otherwise it parses the whole output.
It removes commas and simple \texttt{\textbackslash boxed\{\}} braces before extraction.
For numeric tasks, it extracts decimal numbers with the regular expression \verb|-?\d+(?:\.\d+)?| and uses the last match as the answer, normalizing it with Python \texttt{Decimal} so that integral decimals such as \texttt{25.0} become \texttt{25}.
For Boolean tasks, it extracts the last occurrence of \texttt{true} or \texttt{false}, case-insensitively.
Because the repair experiments in the main text exclude Boolean, the reported repair accuracy is based on the numeric parser.
If no parseable answer is found, the output is counted as incorrect.
The same parser normalizes the gold final answer, the original generated answer, and the repair-model answer.

\subsection{Same-Family Answer Repetition Diagnostic}
\label{app:repair_anchoring}

We ran an additional diagnostic to understand why the same-family repair model does not benefit from CROP prefixes in Table~\ref{tab:downstream_repair_usefulness}.
The diagnostic focuses on traces whose original final answer was wrong and asks whether the repair output repeats that same wrong answer.
We normalize answers before comparing them, so formatting differences such as commas or whitespace do not count as different answers.
This is relevant because the CRV target traces were generated and repaired by models from the same family.
A high repeat-original-wrong rate is therefore consistent with same-family answer repetition: the repair model may be pulled toward the same continuation or answer bias as the generator that produced the trace.

  \begin{table*}[t]
\centering
\caption{\textbf{Llama3.1-8B-Instruct same-family answer repetition diagnostic.} Rates are mean percentages over 20 repeated test splits. ``Repeat original wrong answer'' and ``Recover original wrong answer'' are computed only on traces whose original final answer was wrong. ``Degrade original correct answer'' is computed only on traces whose original final answer was correct.}
\label{tab:llama_repair_anchoring}
\small
\setlength{\tabcolsep}{4pt}
\begin{tabular}{llrrr}
\toprule
Domain & Input mode & \shortstack{Repeat original\\wrong answer} & \shortstack{Recover original\\wrong answer} & \shortstack{Degrade original\\correct answer} \\
\midrule
\multirow{4}{*}{Arithmetic}
 & Question only & \textbf{12.8} & \textbf{47.4} & 24.1 \\
 & Full trace & 74.3 & 9.6 & 6.2 \\
 & Whole-trace abstention & 47.2 & 24.0 & \textbf{3.9} \\
 & CROP prefix & 31.8 & 23.5 & 5.1 \\
\addlinespace[2pt]
\multirow{4}{*}{GSM8K}
 & Question only & \textbf{8.1} & 47.8 & 21.5 \\
 & Full trace & 31.1 & 42.7 & \textbf{17.0} \\
 & Whole-trace abstention & 9.3 & \textbf{55.4} & 19.3 \\
 & CROP prefix & 11.6 & 53.1 & 19.5 \\
\bottomrule
\end{tabular}
\end{table*}
{}{%
    }%
  }%

Table~\ref{tab:llama_repair_anchoring} supports this interpretation but also shows that it is not the whole story.
On Arithmetic, the same-family repair model repeats the original wrong answer much more often with trace context than with the question alone: 74.3\% for full traces, 47.2\% for whole-trace abstention, and 31.8\% for CROP prefixes, compared with 12.8\% for question-only input.
CROP reduces this repetition relative to full traces and whole-trace abstention, but it also slightly increases degradation of originally correct traces relative to whole-trace abstention.
On GSM8K, CROP has a slightly higher repeat-original-wrong rate and slightly lower recovery rate than whole-trace abstention.
Thus the negative deltas for this repair model are best read as a model-specific context trade-off, not as a failure of CROP calibration.

\subsection{Qualitative CROP and Repair Examples}
\label{app:qualitative}

The tables below show three examples for each repair dataset, covering suffix removal that helps repair, retained intermediate context that helps repair, and early withholding failure.
The examples come from the held-out repair evaluation and use the CROP threshold associated with their split.
Tables~\ref{tab:qualitative_examples_arithmetic} and~\ref{tab:qualitative_examples_gsm8k} report the retained prefix, withheld suffix, repair outcomes, and takeaway for each case.

  \begin{table*}
\centering
\caption{\textbf{Qualitative Arithmetic repair examples.} We show three held-out Arithmetic examples: suffix removal that helps, retained context that helps, and early withholding that hurts final-answer repair. ``Stop \(a/b\)'' means CROP supplied the first \(a\) steps from a \(b\)-step trace to the repair model.}
\label{tab:qualitative_examples_arithmetic}
\scriptsize
\setlength{\tabcolsep}{1.5pt}
\begin{tabular}{@{}>{\raggedright\arraybackslash}p{0.13\textwidth}>{\raggedright\arraybackslash}p{0.16\textwidth}>{\raggedright\arraybackslash}p{0.25\textwidth}>{\raggedright\arraybackslash}p{0.24\textwidth}>{\raggedright\arraybackslash}p{0.15\textwidth}@{}}
\toprule
Scenario & Case & Certified prefix & Withheld suffix & Repair outcomes \\
\midrule
\textbf{Suffix removal} &
DeepSeek-R1-0528-Qwen3-8B; arith-1355.\par Truth \(1344\); original \(20784\).\par Stop \(2/6\). &
Keeps the setup and correct early values: \(6\cdot6=36\), \(6\cdot36=216\), \(6+0=6\), and \(6\cdot216=1296\). &
The suffix incorrectly multiplies the left term by the right-side subtotal, computing \(1296\cdot16=20736\), then adds \(48\) to reach \(20784\). &
CROP: \(1344\) (correct).\par Full trace: \(1296\) (wrong).\par Question only: \(1296\) (wrong).\par Removing the suffix changes the repair outcome. \\
\addlinespace[2pt]
\textbf{Useful context} &
Gemma 4 E4B IT; arith-1580.\par Truth \(485\); original \(565\).\par Stop \(5/8\). &
Keeps useful intermediate values: \((1+9)=10\), \((5+2)=7\), \(10\cdot4=40\), \(7\cdot5=35\), \((6+9)=15\), and \(15\cdot35=525\). &
The suffix loses the outer negation by adding \(40+525\). The correct final combination is \(-40+525\). &
CROP: \(485\) (correct).\par Full trace: \(485\) (correct).\par Question only: \(525\) (wrong).\par The prefix contains enough setup to finish. \\
\addlinespace[2pt]
\textbf{Early withholding} &
DeepSeek-R1-0528-Qwen3-8B; arith-2861.\par Truth \(103\); original \(0\).\par Stop \(4/6\). &
Keeps \(13\cdot(-8)=-104\), but also keeps the sign mistake that treats \(-(-1)\) as \(-1\), producing \(-105\). &
The withheld suffix contains the final outer-negation structure, although the generated trace mishandles it. The full problem or full trace was enough for repair here. &
CROP: \(-105\) (wrong).\par Full trace: \(103\) (correct).\par Question only: \(103\) (correct).\par The prefix is too short for this repair model. \\
\bottomrule
\end{tabular}
\end{table*}{}{%
    }%
  }%

  \begin{table*}
\centering
\caption{\textbf{Qualitative GSM8K repair examples.} We show three held-out GSM8K examples: suffix removal that helps, retained context that helps, and early withholding that hurts final-answer repair. ``Stop \(a/b\)'' means CROP supplied the first \(a\) steps from a \(b\)-step trace to the repair model.}
\label{tab:qualitative_examples_gsm8k}
\scriptsize
\setlength{\tabcolsep}{1.5pt}
\begin{tabular}{@{}>{\raggedright\arraybackslash}p{0.13\textwidth}>{\raggedright\arraybackslash}p{0.16\textwidth}>{\raggedright\arraybackslash}p{0.25\textwidth}>{\raggedright\arraybackslash}p{0.24\textwidth}>{\raggedright\arraybackslash}p{0.15\textwidth}@{}}
\toprule
Scenario & Case & Certified prefix & Withheld suffix & Repair outcomes \\
\midrule
\textbf{Suffix removal} &
DeepSeek-R1-0528-Qwen3-8B; gsm8k-1146.\par Truth \(16\); original \(56\).\par Stop \(3/5\). &
Keeps the useful setup: Adam can buy \(20\) rocks, selling \(60\%\) means \(12\) rocks, and revenue is \(12\cdot7=84\). &
The suffix computes loss against hypothetical full-sale revenue, \(140-84=56\), instead of against the original \(100\) investment. &
CROP: \(16\) (correct).\par Full trace: \(56\) (wrong).\par Question only: \(8\) (wrong).\par Removing the suffix removes the wrong comparison. \\
\addlinespace[2pt]
\textbf{Useful context} &
DeepSeek-R1-0528-Qwen3-8B; gsm8k-912.\par Truth \(6\); original \(4.5\).\par Stop \(2/7\). &
Keeps the key intermediate result that an extreme puzzle takes \(4\cdot45=180\) minutes. &
The suffix switches to the time difference \(180-45=135\) and divides that by \(30\), leading the original trace to \(4.5\). &
CROP: \(6\) (correct).\par Full trace: \(6\) (correct).\par Question only: \(2\) (wrong).\par The retained duration anchors the correct repair. \\
\addlinespace[2pt]
\textbf{Early withholding} &
DeepSeek-R1-0528-Qwen3-8B; gsm8k-1100.\par Truth \(5\); original \(3.5\).\par Stop \(2/6\). &
Keeps only the lunch calculation over five days, \(5\cdot30=150\) minutes. &
The suffix contains the missing break information, even though the generated trace undercounts it. Without that information, the prefix suggests a lunch-only answer. &
CROP: \(2.5\) (wrong).\par Full trace: \(5\) (correct).\par Question only: \(5\) (correct).\par Stopping early hides information needed for repair. \\
\bottomrule
\end{tabular}
\end{table*}
{}{%
    }%
  }%

The examples illustrate why the repair results are model- and instance-dependent.
When the withheld suffix contains a misleading continuation, CROP can help by removing it.
When the retained prefix contains enough setup, it can help the repair model finish the solution.
When CROP stops too early, the retained prefix may be valid under the step-label risk target but still insufficient for final-answer repair.

\section{Artifact Controls}
\label{app:artifact_qualitative}

\paragraph{Artifact, label, and order controls.}
The artifact-matched result asks whether PRM-backed risk proxies still help after balancing simple cues that might otherwise explain the gains.
We group traces by domain, trace-length bucket, average step-length bucket, answer format, and whether the prompt contains arithmetic or logical operators.
We then average retained-prefix fractions equally across the nonempty groups.
This changes only how the held-out evaluation is summarized; it does not change CROP calibration, labels, test instances, or risk proxy functions.
The label- and order-control rows run two additional stress tests: one breaks the relationship between labels and traces during risk proxy fitting, and the other disrupts the step order within traces.
Table~\ref{tab:artifact_controls_combined} reports both the artifact-matched comparison and the label/order stress tests.

  \begin{table*}
\centering
\caption{\textbf{Artifact, label, and order controls.} Panel A balances simple text and dataset cues before comparing retained prefix length. Panel B checks what happens when label associations or step order are disrupted. PRM-backed scores remain strong, suggesting their gains are not explained solely by superficial formatting.}
\label{tab:artifact_controls_combined}
\footnotesize
\setlength{\tabcolsep}{3pt}
\begin{tabular}{@{}lrrr@{}}
\toprule
\multicolumn{4}{@{}l}{\textbf{Panel A: retained prefix after balancing simple artifacts}} \\
Score & Risk & \shortstack{Kept after\\matching} & \shortstack{Gain over\\token/format} \\
\midrule
Token/format logistic & 3.6 & 62.0 & +0.0 \\
Trace-feature logistic & 4.5 & 64.0 & +2.0 \\
Direct PRM & 3.9 & \textbf{89.5} & \textbf{+27.4} \\
Trace features + PRM logistic & 3.2 & \textbf{88.1} & \textbf{+26.1} \\
\midrule
\multicolumn{4}{@{}l}{\textbf{Panel B: label and step-order stress tests on pooled target splits}} \\
Control & AUROC & \shortstack{Prefix\\risk} & \shortstack{Prefix\\kept} \\
\midrule
Random & $0.501 \pm 0.005$ & $4.5 \pm 0.2$ & $51.0 \pm 1.5$ \\
Token/format logistic & $0.827 \pm 0.004$ & $4.7 \pm 0.2$ & $78.9 \pm 0.9$ \\
Trace-feature logistic & $0.861 \pm 0.004$ & $4.6 \pm 0.2$ & \textbf{84.0 $\pm$ 0.7} \\
Label-shuffled & $0.724 \pm 0.014$ & $4.7 \pm 0.2$ & $76.5 \pm 1.0$ \\
Trace-order shuffled & $0.863 \pm 0.007$ & $4.7 \pm 0.3$ & $78.5 \pm 1.1$ \\
\bottomrule
\end{tabular}
\end{table*}
{}{%
    }%
  }%

The results show that simple token/format cues are useful, so the benchmark contains exploitable surface regularities.
However, PRM-backed risk proxies still retain much longer prefixes after artifact matching, and the disrupted label/order controls reduce trace-feature utility.
This supports the discussion claim that the strongest PRM-backed gains are not explained solely by formatting, length, answer-format, or dataset-composition artifacts.

%
%
%

\section{Artifact licenses, terms of use, and intended use.}
We use publicly available datasets, model checkpoints, and evaluation artifacts according to their released licenses and terms of use. The process-supervision datasets, including CRV, Math-Shepherd, ProcessBench, PRMBench, and PRM800K, are used only for research evaluation, which is consistent with their intended use as reasoning and process-supervision benchmarks. The pretrained model checkpoints and process reward model used for scoring and repair are accessed under their respective model licenses or usage terms. We do not redistribute dataset contents or model weights; any released artifacts will contain code, configuration files, and aggregate outputs needed to reproduce the experiments.

\section{Personally identifying and offensive content.}
We use publicly available reasoning and process-supervision benchmarks rather than collecting new user data. The datasets used in our experiments consist primarily of mathematical or reasoning problems, generated reasoning traces, and process labels. We did not identify personally identifying information or offensive content in the examples used for our experiments. We do not redistribute raw dataset contents beyond what is already publicly available under the original artifacts' access conditions.

\section{Information About Use of AI Assistants}
AI assistants were used to streamline code development and polish writing.

\end{document}